\icmltitlerunning{Discovering Latent Covariance Structures for Multiple Time Series}
\newacronym[longplural={Gaussian Processes}]{gp}{GP}{Gaussian Process}
\newacronym[longplural={Indian Buffet Processes}]{ibp}{IBP}{Indian Buffet Process}
\newacronym{abcd}{ABCD}{Automatic Bayesian Covariance Discovery}
\newacronym{ckl}{CKL}{Compositional Kernel Learning}
\newacronym{rabcd}{R-ABCD}{Relational Automatic Bayesian Covariance Discovery}
\newacronym[longplural={Latent GP Feature Models}]{lgp}{gpLFM}{Latent GP Feature Model}
\newacronym[longplural={Latent Kernel Models}]{lkm}{LKM}{Latent Kernel Model}
\newacronym{vi}{VI}{variational inference}
\newacronym{elbo}{ELBO}{Evidence Lower Bound}
\newacronym{bic}{BIC}{Bayesian Infomation Criteria}
\newacronym{pe}{PSE}{partial set expansion}
\newacronym{fe}{FE}{full expansion}
\newcommand{\todo}[1]{\textcolor{red}{\emph{[TODO: #1]}}}
\definecolor{c1}{rgb}{1.0, 0.4980392156862745, 0.05490196078313725}
\definecolor{c2}{rgb}{0.17254901960784313, 0.6274509803921569, 0.17254901960784313}
\definecolor{c3}{rgb}{0.8392156862745098, 0.15294117647058825, 0.1568627450980392}
\definecolor{c4}{rgb}{0.5803921568627451, 0.403921568627451, 0.7411764705882353}
\definecolor{c5}{rgb}{0.5490196078431373, 0.33725490196078434, 0.29411764705882354}
\begin{document}
\newcommand{\mytitle}{Discovering Latent Covariance Structures for Multiple Time Series}
\twocolumn[
\icmltitle{\mytitle}



\icmlsetsymbol{equal}{*}

\begin{icmlauthorlist}
	\icmlauthor{Anh Tong}{unist}
	\icmlauthor{Jaesik Choi}{unist}
\end{icmlauthorlist}

\icmlaffiliation{unist}{Department of Computer Science and
Engineering, Ulsan National Institute of Science and Technology, Ulsan, 44919, South Korea}

\icmlcorrespondingauthor{Jaesik Choi}{jaesik@unist.ac.kr}

\icmlkeywords{Gaussian Process, The Automatic Statistician, Time series}

\vskip 0.3in
]



\printAffiliationsAndNotice{}
\begin{abstract}
Analyzing multivariate time series data is important to predict future events and changes of complex systems in finance, manufacturing, and administrative decisions. The expressiveness power of Gaussian Process (GP) regression methods has been significantly improved by compositional covariance structures. In this paper, we present a new GP model which naturally handles multiple time series by placing an Indian Buffet Process (IBP) prior on the presence of \textit{shared} kernels. Our selective covariance structure decomposition allows exploiting shared parameters over a set of multiple, selected time series. We also investigate the well-definedness of the models when infinite latent components are introduced. We present a pragmatic search algorithm which explores a larger structure space efficiently. Experiments conducted on five real-world data sets demonstrate that our new model outperforms existing methods in term of structure discoveries and predictive performances.
\end{abstract}

\section{Introduction}
Time series data analysis is important for numerous real-world applications: signal processing of audio and video data; the study of financial variables such as stocks, currencies, and crude oil prices. When several data sources are correlated, a model that exploits a group structure often demonstrates competitive predictive performance~\cite{group_lasso}. It is critical to learn how multiple time series are correlated. Many practical applications i.e. visualizing, filtering or generating reports from multiple time series, depend on their inherent encoded relations. However, it is non-trivial to extract such important relations among them.

A recent work contributed a highly general framework called the~\gls*{abcd} which solves regression tasks using~\gls*{gp} models~\cite{DuvLloGroetal13,Lloyd2014ABCD,Ghahramani15_nature,Hwangb16,BO_Model_NIPS16,KimT17}. Previously, selecting \acrshort*{gp} kernels was heavily based on expert knowledge or trial-and-error. The \acrshort*{abcd} automatically extracts an appropriate compositional covariance structure to fit data based on grammar rules; then it generates human-friendly reports explaining data. 
The compositional covariance structure makes the GP models more expressive and interpretable so that GP kernels are explained in a form of natural language. There are cognitive studies~\cite{compositional_nips2016,SCHULZ201744} showing that compositional functions are intuitively preferred by humans.
Exploiting these key properties of compositional kernel, we develop a kernel composition framework for multiple time series which produces explainable outputs with improved predictive accuracy. 

A solid foundation for multi-task~\acrshort*{gp} regression methods has been established in~\cite{BonillaCW07,spike_lab,kernel_vector,gprn_wilson12icml,GuarnizoAO15_IBP_GP}. However, assigning compositional kernel structures has not yet been investigated in the existing multi-task~\acrshort*{gp} regression methods. Notably, the multi-output GP regression network (GPRN)~\cite{gprn_wilson12icml} is highly general, and models data by the combinations of latent GP functions and weights which are also GPs. Applying structure search is challenging due to the huge search space to cover the whole network.
In order to select appropriate covariance structures for multiple correlated sequences, we model time series by additive structures which are, instead of staying fixed, searched over a set of kernels. We place~\gls*{ibp}~\cite{IBP_GriffithsG05,IBP11} prior over an indicator matrix that represents whether the time series share one or many of these additive kernels. Furthermore, we introduce a search algorithm which enables us to explore a large kernel space. 

Here, we present a new model to handle heterogeneous, correlated multiple time series by stochastic~\acrshort*{gp} kernels.
 The combination of latent features and interpretable covariance structures brings a new tool to understand multiple time series better. Our model outputs human-readable reports with high-level abstraction as well as the relation among time series. We believe such results potentially facilitate the process of decision making in many fields i.e. scientific discovery, financial management. 

This paper offers the following contributions: (1) we introduce the~\gls*{lkm}, justify its well-definedness and develop its approximate inference algorithm; (2) we introduce a search procedure applicable to multiple time series and our working model; (3) an application making comparison reports among multiple time series.

This paper is structured as follows. Section~\ref{sec:models} presents our~\acrshort*{lkm}. Section~\ref{sec:structure_discovery} introduces a search procedure working with this model. Section~\ref{sec:exp} shows our experiments on several real-world data sets and gives comparison reports produced from our models. We conclude in Section~\ref{sec:conclusion}.
\section{Related work}
\label{sec:related_work}
In the compositional kernel, there have been efforts on improving the efficiency of model selection i.e. using Bayesian optimization, or sparse GP~\cite{BO_Model_NIPS16,KimT17,structured_ae} and relating human cognitive procedures~\cite{compositional_nips2016,SCHULZ201744}. Recently,~\cite{differential_ckl} proposed a neural network construction of compositional kernels with a guarantee in approximation capacity. Yet, the framework is less interpretable. For multiple time series,~\cite{Hwangb16} introduced a global \textit{shared} information among multiple sequences and individual kernels for each kernel. Our model is more general because no strong correlation assumption is required, the relation among time series is automatically discovered by~\acrshort*{ibp} matrix instead.

Stochastic grammar for ABCD~\cite{SchaechtleZRSM15} is introduced where interpretable kernels are selected via Bayesian learning over a binomial distribution imposed on the presence of kernels. It provides a sampling approach based on Venture probabilistic programming language~\cite{venture}. Another work~\cite{abcd_pp} represents kernel compositions in Stan language~\cite{stan}. A recent work~\cite{bayes_synth} built on the top of Venture as well presents a program synthesis approach to extract compositional kernels. 
However, these works only can apply to a single time series. While in our case, we work on multiple time series using IBP prior with an in-depth investigation of the model construction.

In the multi-task learning perspective, multi-task learning for GP regression has been studied extensively~\cite{semiparam,BonillaCW07,kernel_vector, gprn_wilson12icml,spike_lab,GuarnizoAO15_IBP_GP,guarnizo_indian_2015}. These methods commonly share limitations that GP kernel structures are fixed or given, not having the flexibility in selecting GP kernels. The additive kernel construction of our model is common with the Linear Model of Coregionalization (LMC)~\cite{kernel_vector} and extensions~\cite{sparse_convolved_gp,cross_spectral_gp,MOSM} where kernels are constructed by a linear combination of kernels.
While LMC optimizes these weights together with GP hyperparameters, our model is based on a Bayesian approach to infer $\vZ$. More importantly, the binary latent matrix $\vZ$ enhances the interpretability transparency over real-valued weights.

In terms of stochastic kernel generation,~\cite{levy_process_NIPS17} proposed a L\'evy kernel process where the mixture of kernels is obtained by placing a L\'evy prior over the corresponding spectral density. The~\acrshort*{lkm} is one of the attempts to put uncertainty on kernel constructions using \acrshort*{ibp} prior to select a set of interpretable kernels. 

It is worth mentioning methods which learn complex functions including convolutional networks~\cite{LeCun:1989} and sum-product networks~\cite{PoonD11}. AND-like and OR-like operation have the intuitively similar mechanisms of multiplication and summation in compositional kernels. Beyond this similarity between these operations and composing kernel operations, our work targets to study multiple complex functions where sharing kernels can be understood as AND-like operation among sequences.

\section{Background}

In this section, we provide a brief review of the \acrfull*{abcd} framework~\cite{grosse2012exploiting,DuvLloGroetal13,Lloyd2014ABCD,Ghahramani15_nature} and~\acrfull*{ibp}~\cite{IBP_GriffithsG05}.

\paragraph{\acrfull*{gp}} \acrfull*{gp}~\cite{Rasmussen_GPM} is defined as a multivariate Gaussian distribution over a (possibly infinite) collection of random variables. Whenever we select a subset from this collection, the distribution over the subset also is Gaussian. Commonly, GP is used as a prior over function values, denoted as $f(x) \sim \mathcal{GP}(m(x), k(x,x'))$ with $m(x)$ is the mean function, $k(x,x')$ is the covariance (kernel) function. In practice, the mean function is usually chosen as a zero mean function. Like many other kernel methods, kernel tricks are applicable to construct new kernels for~\acrshort*{gp}, be one of the key properties in the framework that we will describe next.

\paragraph{The \acrshort*{abcd} framework} The ABCD framework follows a typical Bayesian modeling process (see~\citet{inference_MacKay}), being composed of several parts e.g. a language of models, a search procedure among models, and a model evaluation. The framework makes use of~\acrfullpl*{gp} to perform various regression tasks.


Selecting kernel functions plays a crucial role in learning GP.~\acrshort*{abcd} searches a model out of an open-ended language of models which is constituted from a context-free grammar and base kernels. The base kernels model different characteristics of data such as white noise ($\textsc{WN}$), constant ($\textsc{C}$), smoothness ($\kSE$), periodicity ($\kPer$), and trending ($\kLin$) (see Appendix~\ref{appendix:base}). The grammar makes it possible to explore and generate new kernels from base ones via composition rules such as the product rule and the sum rule. A greedy search is applied in~\acrshort*{abcd} like in~\citet{grosse2012exploiting}, picking the most appropriate model based on a criterion e.g.~\gls*{bic}. Once the search procedure is finished, a human-readable report is generated from the interpretability of~\acrshort*{gp} base kernels and their compositions.
\paragraph{Indian Buffet Process} The \acrshort*{ibp}~\cite{IBP_GriffithsG05} defines a distribution over a binary matrix $\vZ$ with a finite number of rows and an infinite number of columns:
$\vZ \sim \textrm{IBP}(\alpha),$
with $\alpha$ is the concentration parameter. The matrix indicates feature assignments where the element at the $i$-th row  and the $j$-th column expresses the presence or absence of the $j$-th feature in the  $i$-th object. A natural application of~\acrshort*{ibp} is the linear-Gaussian latent feature model (LFM)~\cite{IBP_GriffithsG05}. Data represented by $\vX$ is factorized into an~\acrshort*{ibp} latent matrix $\vZ$ multiplying with a feature matrix $\vA$ with a Gaussian noise matrix $\mathcal{E}$:
$
\vX = \vZ\vA + \mathcal{E}.
$

\section{\acrfull*{lkm}}
\label{sec:models}

In this section, we define the~\acrfull*{lkm} and discuss its theoretical properties and unique characteristics. Then we will introduce inference algorithms for~\acrshort*{lkm}.

\subsection{Definition}
\paragraph{Notation} Let us denote $\vx_n =(x_{n1},..., x_{nD})^\top$ be a vector representing the $n$-th time series where $x_{nd}$ is the data point of the $n$-th time series at the $d$-th time step $t_d$. Here, $N$ is the number of time series and $D$ is the number of data points in each time series. To clarify further notations, we denote a data matrix $\vX$ taking $\vx_n, n = 1\dots N$ as rows. We introduce a latent matrix $\vZ$ taking $\vz_n, n=1\dots N$ as rows. 

\label{sec:lkm}
Given a set of~\acrshort*{gp} kernels $\{\vC_k\}_{k=1}^K$, we wish to model each time series $\vx_n$ with
\begin{equation}
\begin{split}
\vZ &\sim \operatorname{IBP} (\alpha),\\
\vf_{n} &\sim {\cal GP}(\mathbf{0}, \sum_{k=1}^K z_{nk} \vC_k), \\
\vx_{n}  &\sim {\cal N} (\vf_{n}, \sigma^2_n \vI),
\end{split}
\label{eq:lkm}
\end{equation}
where $\alpha$ is the~\acrshort*{ibp} concentration parameter. By the above model construction, an observation $x_{nd}$ corresponds to a~\acrshort*{gp} latent function variable $f_n(t_d)$. The $p(\vX|\vZ)$ is the product of all $p(\vx_n|\vz_n)$ where
\begin{equation}
	 p(\vx_n|\vz_n) = \left|2\pi \vD(\vz_n)\right|^{-\nicefrac{1}{2}} \exp\left(-\frac{1}{2}\vx^\top_n \vD(\vz_n)^{-1}\vx_n  \right),
	 \label{eq:likelihood_n}
\end{equation}
with $\vD(\vz_n) = \sum_{k=1}^K z_{nk}\vC_k + \sigma_n^2 \vI $, and $z_{nk} \in \{0,1\}$ is the element of $N\times K$ matrix $\vZ$ indicating whether the $n$-th time series has additive kernel $\vC_k$. Since we place~\acrshort*{ibp} on $\vZ$, it can have infinitely many columns as $K \rightarrow \infty$. This model focuses on the process of creating the stochastic kernel $\vD(\vz_n)$ for each $\vx_n$. The kernel selection procedure relies on learning~\acrshort*{ibp} matrix via Bayesian inference. 

\subsection{Properties}
\paragraph{Well-definedness of \acrshort*{lkm}}
Since an IBP prior is imposed on the matrix $\vZ$, the number of its columns can go to infinity. Thus we may have an infinite number of kernels. It is important to verify whether $p(\vX|\vZ)$ forms a well-defined probability distribution even with an infinite number of kernels.~\citet{IBP11} gave a detailed analysis in the case of LFM. In fact, $p(\vX|\vZ)$ in LFM is independent to feature matrix because of marginalization over feature matrix. However, $p(\vX|\vZ)$ in~\acrshort*{lkm} is still associated with kernels in its representation. We will justify the well-definedness in the case of~\acrshort*{lkm} as follow.
\begin{prop}
	The likelihood of~\acrshort*{lkm} is well-defined.
\end{prop}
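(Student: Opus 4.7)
The plan is to exploit the fact that, unlike in \acrshort*{lgp}, the likelihood of \acrshort*{lkm} requires no marginalization: it factorizes over time series as $p(\vX|\vZ) = \prod_{n=1}^N p(\vx_n | \vz_n)$, so it suffices to establish that each Gaussian factor $p(\vx_n | \vz_n) = |2\pi \vD(\vz_n)|^{-1/2} \exp(-\frac{1}{2} \vx_n^\top \vD(\vz_n)^{-1} \vx_n)$ is a well-defined density in $\vx_n$ even when $\vZ$ is drawn from an \acrshort*{ibp} prior (hence has countably infinitely many columns in principle).

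First, I would apply the \emph{lof} operation on $\vZ$, exactly as in the proof for \acrshort*{lgp}, to separate the $K_+$ nonzero columns (gathered on the left) from the remaining zero columns, and invoke the standard \acrshort*{ibp} fact that $K_+ < \infty$ almost surely. Since the zero columns contribute $z_{nk} = 0$ to every row, they drop out of the sum $\sum_{k=1}^{\infty} z_{nk} \vC_k$ defining $\vD(\vz_n)$. Moreover, restricting to the $K_+$ nonzero columns, each individual row $\vz_n$ only selects a subset $\mathcal{A}_n = \{k : z_{nk} = 1\} \subseteq \{1, \ldots, K_+\}$, which is also finite.

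Consequently $\vD(\vz_n) = \sum_{k \in \mathcal{A}_n} \vC_k + \sigma_n^2 \vI$ is a finite sum of positive semi-definite kernel matrices plus the strictly positive definite term $\sigma_n^2 \vI$, and is therefore positive definite with finite log-determinant and a finite, bounded inverse. This makes $p(\vx_n | \vz_n)$ a proper multivariate Gaussian density on $\mathbb{R}^D$, and the product over $n = 1, \ldots, N$ remains well-defined since $N$ is finite. I would close by noting that $\vD(\vz_n)$ does not depend on the infinite tail of zero columns at all, so the limit $K \to \infty$ causes no pathology, analogous to the \acrshort*{lgp} case.

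The hardest step is the least computational: carefully justifying that taking $K \to \infty$ commutes with the operations $(\cdot)^{-1}$ and $\log |\cdot|$ appearing in the Gaussian density. This is not really a convergence issue because, thanks to \emph{lof} and the almost-sure finiteness of $K_+$, only finitely many terms are ever nonzero; the remaining infinite sum is literally the zero matrix and requires no analytic tail bound. Thus the argument reduces to a standard statement about Gaussian densities with a positive definite covariance, without the Kronecker/vectorization machinery that was needed for \acrshort*{lgp}.
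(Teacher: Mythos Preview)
Your proposal is correct and follows essentially the same approach as the paper: factorize $p(\vX|\vZ)=\prod_n p(\vx_n|\vz_n)$, apply \emph{lof} to reduce to the finitely many nonzero columns $K_+$, and conclude that each $\vD(\vz_n)$ is a finite sum of covariance matrices plus $\sigma_n^2\vI$ and hence a valid Gaussian covariance. You supply slightly more detail than the paper (explicit positive-definiteness via the $\sigma_n^2\vI$ term and the remark that no genuine limit is taken), but the structure of the argument is the same.
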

\begin{proof}
	
	The likelihood can be easily obtained by
	$$p(\vX|\vZ) = \prod_{n=1}^N p(\vx_n | \vz_{n}).$$
	We will use \emph{lof} operator on $\vZ$. The~\emph{lof} transforms a binary matrix by reordering its columns by the binary number associated to that column~\cite{IBP11}. Since all kernels $\vC_k$ are commutative, \emph{lof} performs on $\vZ$ without affecting $p(\vX|\vZ)$ as kernels are exchanged accordingly.\\
	We apply \emph{lof} on $\vZ$ to obtain $[\vZ^{+} \vZ^0]$ where $\vZ^{+}$ contains $K^+$ nonzero columns and $\vZ_{0}$ contains $K^0$ zero columns. Each row in $\vZ_{+}$ contributes to generate kernel $\vD(\vz_n) = \sum_{k=1}^{K^+}z_{nk}^+\vC_k + \sigma_n^2\vI$. When $K \rightarrow \infty$, $K^+$ still stays finite as the property of IBP. Thus, $\vD(\vz_n)$ is now the sum of a finite number of covariances kernels $\vC_k$. This means that each multivariate Gaussian likelihood $p(\vx_n | \vz_{n})$ has a well-defined covariance. Finally, we can conclude that $p(\vX|\vZ)$ is well-defined.
\end{proof}
With the above proposition, IBP prior becomes a regularizer preventing the degradation of kernel construction (an explosion of the kernel variances) when increasing the number of kernels $K$.

\paragraph{Comparisons with existing models}
Feature sharing models~\cite{spike_lab,gprn_wilson12icml,GuarnizoAO15_IBP_GP} commonly represent data as
$$\vx_n = \sum_{k=1}^Kw_k\vf_k + \boldsymbol{\epsilon}_n,$$
with $\vf_k, k=1\dots K$ are shared features, $\boldsymbol{\epsilon}_n, n=1\dots N$ are Gaussian noise vectors. Each $\vf_k$ is a drawn GP realization from $\vC_k$. The $w_k$ can be placed spike and slab prior~\cite{spike_lab} or are samples from GPs~\cite{gprn_wilson12icml}. 

\begin{figure}[t]
	\centering
	\includegraphics[width=0.5\columnwidth]{./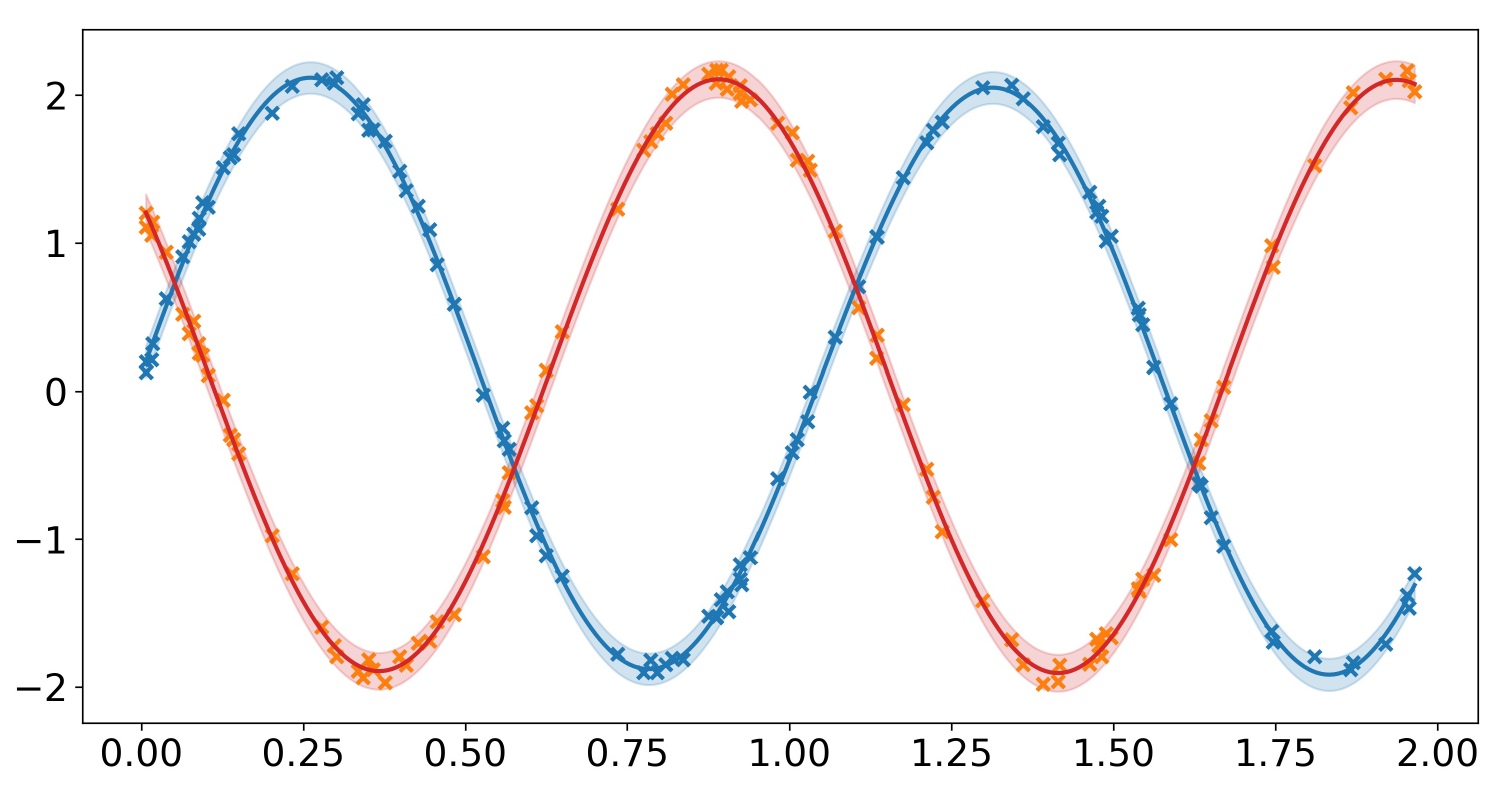}
	\captionof{figure}{Fitting two functions using LKM. The toy data set contains two realizations generated from a~\acrshort*{gp} prior with a periodic kernel.}
	\label{fig:realization} 
\end{figure}

Our~\acrshort*{lkm} is more expressive than the feature sharing family in terms of function realizations. Suppose the posterior decomposition of additive Gaussian distributions presents as: If $\vf = \vf_1 + \vf_2$, where $\vf_1 \sim {\cal N}(\mathbf{0}, \vK_1)$, $\vf_2 \sim {\cal N}(\mathbf{0}, \vK_2)$, the conditional distribution of $\vf_1$ given the sum $\vf$ is 
$$\textstyle \vf_1 | \vf \sim {\cal N}(\vK_1^\top(\vK_1 + \vK_2)^{-1} \vf, \vK_1 - \vK_1^\top(\vK_1 + \vK_2)^{-1}\vK_1).$$
In the multiple time series setting, each decomposed component under the same~\acrshort*{gp} prior could be realized differently in different time series. In other words, for a specific $k$, the posterior $\vf_{k} | \vx_{n}$ varies whenever $\vx_n$ changes even with the fixed covariance $\vC_k$. A simple setup in Figure~\ref{fig:realization} can verify this observation. We generate two sequences from a single periodic GP and then run~\acrshort*{lkm} on this data with two different periodic kernels $\vC_1$ and $\vC_2$. When we learn~\acrshort*{lkm}, $\vZ = [0, 1; 0, 1]$ is obtained. That is,~\acrshort*{lkm} is able to recognize these two realizations from one~\acrshort*{gp}. 

We also emphasize that the Bayesian approach that is considered in our kernel construction, can be viewed as a stochastic kernel generative process~\cite{levy_process_NIPS17}.

Figure~\ref{fig:graphical_model} illustrates the plate notations of ~\acrshort*{lkm} and R-ABCD~\cite{Hwangb16}. R-ABCD shares a global kernel for all time series and allocates a distinctive kernel $\vC_n$ for each time series. Note that spectral mixture kernel~\cite{WilsonA13} is used for $\vC_n$ in R-ABCD prevents ones from deriving interpretable models.

\begin{figure}[t]
	\centering
	{\includegraphics[width=0.27\textwidth]{./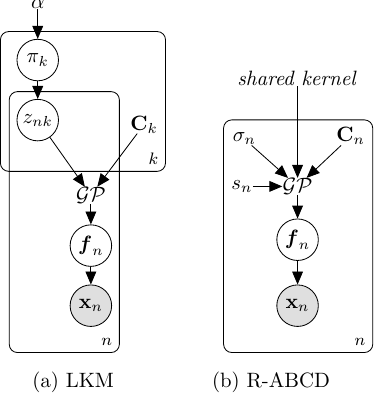}}
	\captionof{figure}{Graphical model of (a)~\acrshort*{lkm} and (b) R-ABCD.}
	\label{fig:graphical_model}
\end{figure}

\subsection{Inference algorithm}
\label{sec:vi}
\paragraph{\Acrlong*{vi}} Variational inference methods approximate the true posterior $p(\vZ|\vX)$ by a variational distribution $q(\vZ)$. The method converts the optimization problem of KL divergence between $p$ and $q$ into an equivalent problem by maximizing the evidence lower bound (ELBO) $\cal L$, 
\newcommand{\bpi}{\boldsymbol{\pi}}
\begin{align*}
\log   p(\vX) 
&\geq \mathbb{E}[\log p(\vX, \vZ)] + H[q] \\
&= \expect[\log p(\vZ)] + \expect[\log p(\vX|\vZ)] + H[q]  \triangleq \mathcal{L} .
\end{align*}
where $\mathbb{E}$ indicates the expectation over the approximate posterior distribution $q(\vZ)$, and $H[q]$ is the entropy of $q$. The last equation in the above derivation comes from the model definition in Equation~\ref{eq:lkm} where the joint distribution $p(\vX, \vZ)$ is in the form of $p(\vX|\vZ)p(\vZ)$. Here, we choose the variational distribution $q(\vZ)$ in the mean-field family. It is factorized into $q(z_{nk}) = \operatorname{Bernoulli}(z_{nk};\nu_{nk})$. 

The first term $\expect[\log p(\vZ)]$ in $\mathcal{L}$ is explained in Appendix~\ref{appendix:expect}~\cite{vibp_Doshi}.

Now our main focus is to estimate $\expect[\log p(\vX|\vZ)]$. Recall that ${p(\vX|\vZ)} {=} {\prod_{n=1}^N p(\vx_n |\vz_n)}$, we can break $\expect[\log p(\vX|\vZ)]$ into the sum of $\mathbb{E}[\log p(\vx_{n} | \vz_{n})]$. The evaluation of each $\mathbb{E}[\log p(\vx_{n} | \vz_{n})]$ is expensive since it needs to compute the expectation of GP likelihood functions associated with discrete random variables $\vZ$. Specifically, $\mathbb{E}[\log p(\vx_{n} | \vz_{n})]$ is written as the sum of $-\frac{1}{2} \vx_{n}^\top \mathbb{E}\left[\vD(\vz_n)^{-1} \right] \vx_{n}$ (or the expectation of data-fit term in \acrshort*{gp} likelihood), $- \frac{1}{2} \mathbb{E} \left[\log \left|2\pi \vD(\vz_n) \right| \right]$ (or the expectation of \acrshort*{gp} model complexity) and a constant term.
Each expectation is the sum of following $2^K$ terms: (1) $p(\vz_{n} = \vt) \vD(\vt)^{-1}$ for all $\vt \in \{0,1\}^K $ in the case of the expectation of inverse matrix; (2) $p(\vz_{n} = \vt) \log \left|2\pi \vD(\vt) \right| $ for all $\vt \in \{0,1\}^K $ in the case of the expectation of log-determinant. Hence, it is not practical to estimate an exponential number of inverse and determinant operations. 

\paragraph{Relaxation}To mitigate the difficulty in estimating $\expect [\log p(\vx_n|\vz_n)]$, we first relax the discrete random variables $z_{nk}$ to a continuous ones, then estimate the expectation using Monte Carlo method. The relaxation turns the Bernoulli random variables $z_{nk} \sim \textrm{Bernoulli}(\nu_{nk})$ into 2-dimensional continuous random variable $[\tilde{z}_{nk}, \underaccent{\tilde}{z}_{nk}] \sim \textrm{Concrete}(\nu_{nk}, \lambda)$, where $\lambda$ is the temperature parameter~\cite{gumbel1}. Here, the categorical random variable $[z_{nk}, 1 - z_{nk}]$ corresponds to the relaxed one $[\tilde{z}_{nk}, \underaccent{\tilde}{z}_{nk}]$. We are interested in $\tilde{z}_{nk}$ which corresponds to $z_{nk}$. A sample of $\tilde{z}_{nk}$ is drawn by sampling $g_1$ and $g_2$ from $\textrm{Gumbel}(0,1)$ and computing as
$$\tilde{z}_{nk} = \frac{\exp(\frac{\log(\nu_{nk}) + g_1}{\lambda})}{\exp(\frac{\log(\nu_{nk}) + g_1}{\lambda})+ \exp(\frac{\log(1-\nu_{nk}) + g_2}{\lambda})}.$$
This is known as the Gumbel-Softmax trick~\cite{gumbel1,gumbel2}. The unbiased estimation of $\expect [\log p(\vx_n|\vz_n)]$ after relaxation is 
$$
\expect[\log(p(\vx_n|\vz_n)] \approx \frac{1}{m}\sum_{i=1}^m \log p(\vx_{n}| \tilde{\vz}_n^{(i)})),
$$
where $m$ is the number of samples, $\{\vz_n^{(i)}\}_{i=1}^m$ is the set of samples. The kernel $\vD(\tilde{\vz}_n)$ now takes all $\vC_k$ into account since $\tilde{\vz}_n$ is in $(0,1)^K$ instead of $\{0,1\}^K$.
Now the number of evaluations on matrix inversions and determinants is the number of sample $M$, instead of the number of all (exponential) configurations generated from $K$ binary random variables $\vz_n$. Moreover, the estimation benefits from this reparameterization trick to estimate gradients in stochastic computation graph~\cite{SCG}.

\section{Structure discovery in multiple time series}
\label{sec:structure_discovery}

In this section, we present a search algorithm to discover~\acrshort*{gp} compositional kernels for multiple time series.
\paragraph{Search scheme} To cope with the broad structure space, our algorithms follows the principle of greedy algorithms~\cite{grosse2012exploiting,DuvLloGroetal13,Lloyd2014ABCD}. That is, we maintain a set of additive kernel structures $\{\mathcal{S}_d^{(k)}| \mathcal{S}_d^{(k)} = \prod_l \mathcal{B}^{(k_l)}_d \text{ with } \mathcal{B}^{(k_l)}_d \text{s are base kernels}, k=1\dots K\}$ at a search depth $d$. We map correspondingly $\mathcal{S}_d^{(k)}$ to the required kernels $\vC_k$ in~\acrshort*{lkm}. At the next depth, the set will recruit new additive kernels by expanding some of the elements of the set at the current depth $d$. The context-free grammar rules of the expansion are the same with~\acrfull*{ckl}~\cite{DuvLloGroetal13}. However, for the case when $\mathcal{S}_d^{(k)}$ is expanded into a new kernel which is written in an additive form as $\sum_{m=1}^M \mathcal{S}_{d+1}^{(k_m)}$, we will consider this expansion as $M$ separated expansions $\mathcal{S}_d^{(k)} \rightarrow \mathcal{S}_{d+1}^{(k_m)}$. The generated structures $\mathcal{S}_{d+1}^{(k_m)}$ are added to the set rather than the sum $\sum_{m=1}^M \mathcal{S}_{d+1}^{(k_m)}$. This procedure always makes new candidate structures satisfy the definition of $\{\mathcal{S}_d^{(k)}\}$ without assuming an arbitrary sum.
\paragraph{\Gls*{pe}} Our search algorithm iteratively expands $\mathcal{S}_d^{(k)}$ and obtain a set of candidates $\{\mathcal{S}_d^{(k_1)}, \dots, \mathcal{S}_d^{(k_m)} \}$. We make a new set which is the union of the previous one excluded the selected structure $\{\mathcal{S}_d^{(k)} \}_{k=1}^K \backslash \{\mathcal{S}_d^{(i)}\}$ and the new candidate structures $\{\mathcal{S}_d^{(i_1)}, \dots, \mathcal{S}_d^{(i_m)} \}$ (Figure~\ref{fig:expansion}).
Our~\acrlong*{vi} algorithm (described in Section~\ref{sec:vi}) learns $\vZ$ and \acrshort*{gp} kernels. If there is an improvement in~\acrshort*{bic}~\cite{BIC}, we keep the updated kernel set. Otherwise, it rolls back to the previous one. We proceed to the next expansion using this updated one (Algorithm~\ref{alg:pe}).

Advantages of our~\acrshort*{pe} algorithm are (1) it does not make drastic increases in structure space in each expansion, (2) it carefully assesses models by a selection criterion (\acrshort*{bic}) and flexibly falls back to the previous model if the criterion does not select the new one, (3) the fewer number of kernels in~\acrshort*{pe} makes us easier to initialize~\acrshort*{gp} hyperparameters as well as reduce the number of restarts learning $\vZ$.
\begin{figure}
	\centering
    \scalebox{0.9}{
		\scalebox{0.7}{
\begin{tikzpicture}[node distance=1.5cm,>=stealth',bend angle=45,auto]

  \tikzstyle{place}=[circle,thick,draw=blue!75,fill=blue!20,minimum size=10mm]
  \tikzstyle{red place}=[place,draw=red!75,fill=red!20]
  \tikzstyle{yellow place}=[place,draw=yellow!75,fill=yellow!20]
  \tikzstyle{green place}=[place,draw=green!75,fill=green!20]
  \tikzstyle{transition}=[rectangle,thick,draw=black!75,
  			  fill=black!20,minimum size=10mm]
  
  \tikzstyle{box}=[rectangle, rounded corners, minimum height=2cm, minimum width=5cm, align=center,fill=#1,draw=gray!70,line width=1pt]

  \tikzstyle{every label}=[red]

  \begin{scope}
    \node [place]  (s11) {$\mathcal{S}^{(1)}$};
    
    \node [place]	(s21) [below of=s11, yshift=-4mm] {$\mathcal{S}^{(1)}$};
    \node [red place]	(s22) [right of=s21] {$\mathcal{S}^{(4)}$};
    \node [red place]	(s23) [right of=s22] {$\mathcal{S}^{(5)}$};
    \node [red place]	(s24) [right of=s23] {$\mathcal{S}^{(6)}$};
    \node [place]	(s25) [right of=s24] {$\mathcal{S}^{(3)}$};
    \node [place]	(s12) [right of=s11]{$\mathcal{S}^{(2)}$}
    	edge [post] (s22)
        edge [post] (s23)
        edge [post] (s24)
    ;
    \node [place]	(s13) [right of=s12]{$\mathcal{S}_3$};
  \end{scope}
  
	\node[above of=s25, , xshift=-0.8cm, yshift=5mm] {\Large \textbf{PSE}};

  \begin{pgfonlayer}{background}
    \filldraw [line width=4mm,join=round,black!10]
      (s11.north  -| s25.east)  rectangle (s25.south  -| s21.west);
  \end{pgfonlayer}
  
\end{tikzpicture}
}
    }
	\caption{\acrshort*{pe} with $\mathcal{S}^{(2)}$ expanded into 3 others to create a new set. }
	\label{fig:expansion}
\end{figure}

\begin{algorithm}[tb]
	\caption{Partial set expansion of~\acrshort*{lkm} learning}
	\label{alg:pe}
	\begin{algorithmic}[1]
		\REQUIRE Input data and  search depth $D$, initial $\{\mathcal{S}_d^{(k)}\}$
		\FOR {$d=1\dots D$}
			\FOR{ $\mathcal{S}$ in $\{\mathcal{S}_d^{(k)}\}$ of depth $d$}
				\STATE Update $\{\mathcal{S}_d^{(k)}\} \leftarrow \{\mathcal{S}_d^{(k)}\} \backslash \mathcal{S} \cup expand(\mathcal{S})$
				\STATE Run~\acrshort*{lkm} learning
				\IF {improvement in BIC}
					\STATE Use this updated set $\{\mathcal{S}_d^{(k)}\}$
				\ELSE
					\STATE Rollback to previous set $\{\mathcal{S}_d^{(k)}\}$
				\ENDIF
			\ENDFOR
		\ENDFOR
	\end{algorithmic}
\end{algorithm}


Our kernel search procedure is a meta search algorithm inspired from oracle machines in computational theory~\cite{computational_complexity}. The LKM plays a role as an \emph{oracle}. Given a set of kernel structures, one tries to ask the oracle to decide the appropriate structures. The oracle will response an answer as $\vZ$ in our case. Exploiting the returned $\vZ$, the kernel structures will be elaborated more by performing~\acrshort*{pe}. The procedure is repeated by making new inquiry based on the expanded structures.

We emphasize that~\acrshort*{pe} with~\acrshort*{lkm} considers a larger number of kernel structures than those in~\acrshort*{ckl}. 
  Suppose that~\acrshort*{ckl} and our search algorithm have the same found structure at a depth $d$. Whereas the~\acrshort*{ckl}'s structure is $\mathcal{S}_d = \mathcal{S}_d^{(1)}  + \dots + \mathcal{S}_d^{(K)}$,~\acrshort*{pe} represents it as a set $\{\mathcal{S}_d^{(1)}, \dots,\mathcal{S}_d^{(K)}\}$. Let $L$ be the largest number of base kernels in $\mathcal{S}_d^{(k)}$, and $R$ be the maximum number of grammar rules per substructure. All possible search candidates in~\acrshort*{ckl} is $O(RK2^L + R2^K)$ kernels, while~\acrshort*{pe} incorporating with~\acrshort*{lkm} considers $O(K2^{R2^L + K})$ number of kernels. Detailed analysis is provided in Appendix~\ref{sec:compare_pe_ckl}.
  
Although our search algorithm explores a much larger search space than CKL in theory, the prior over $\vZ$ still limits the expressiveness power of our model. Moreover, learning $\vZ$ relies on a gradient-based method where the global optimal is not guaranteed. Thus, our kernel search algorithm may not find the optimal kernel over all the possible candidates.

\section{Experimental evaluations}
\label{sec:exp}
In this section, we describe data sets and demonstrate both qualitative and quantitative results.

\begin{figure*}[h]
	\centering
	\scalebox{0.8}{\input{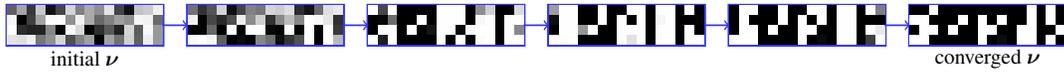}}
	\caption{\small The visualization of $\boldsymbol{\nu}$ as the training of~\acrshort*{lkm} goes on. The columns indicates time series. The row indicates kernels $\vC_k$.}
	\label{fig:learn_z}
\end{figure*}
 \begin{figure*}
 	\centering
 	\begin{tikzpicture}
 	
 	\node[inner sep=0pt] (2) at (0,-0.4) {\includegraphics[width=0.07\textwidth]{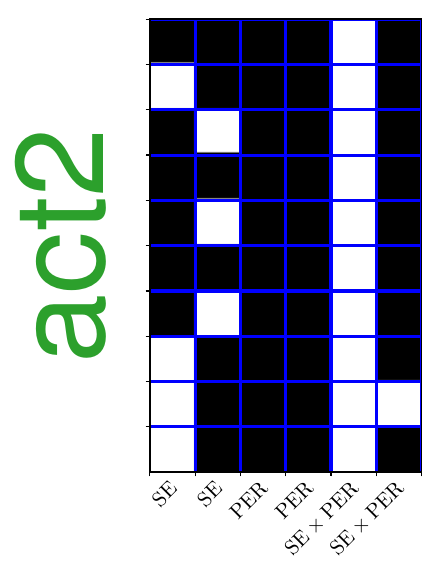}  };
 	\node[inner sep=0pt] (3) at (0,-2.) {\includegraphics[width=0.07\textwidth]{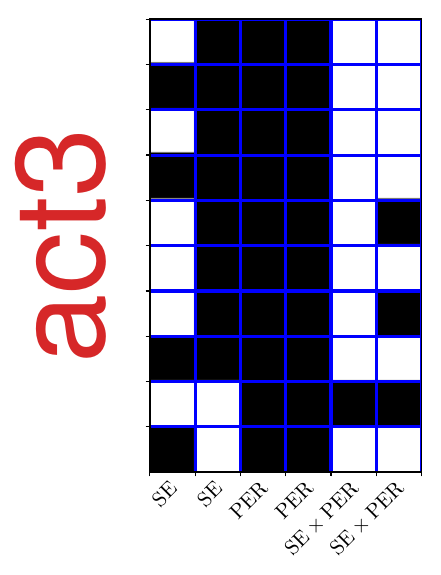}  };
 	\node[inner sep=0pt] (4) at (1.3,-0.4) {\includegraphics[width=0.07\textwidth]{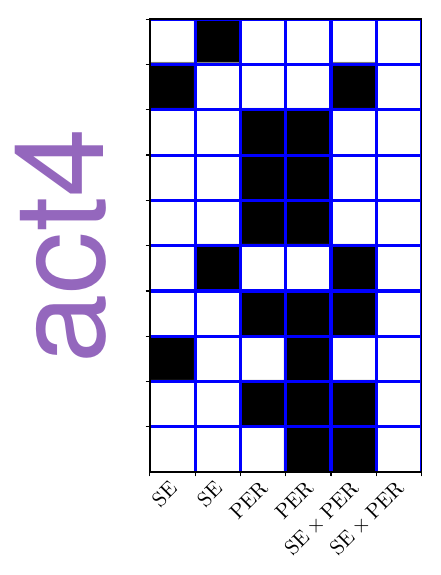}  };
 	\node[inner sep=0pt] (5) at (1.3,-2.) {\includegraphics[width=0.07\textwidth]{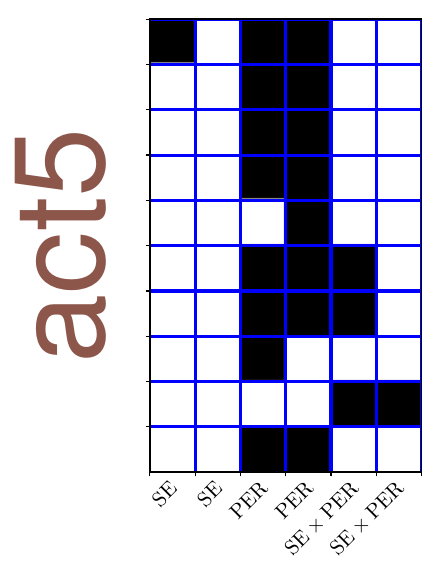}  };
 	\node[inner sep=0pt] (fit4) at (5.,-1) {\includegraphics[width=0.33\textwidth]{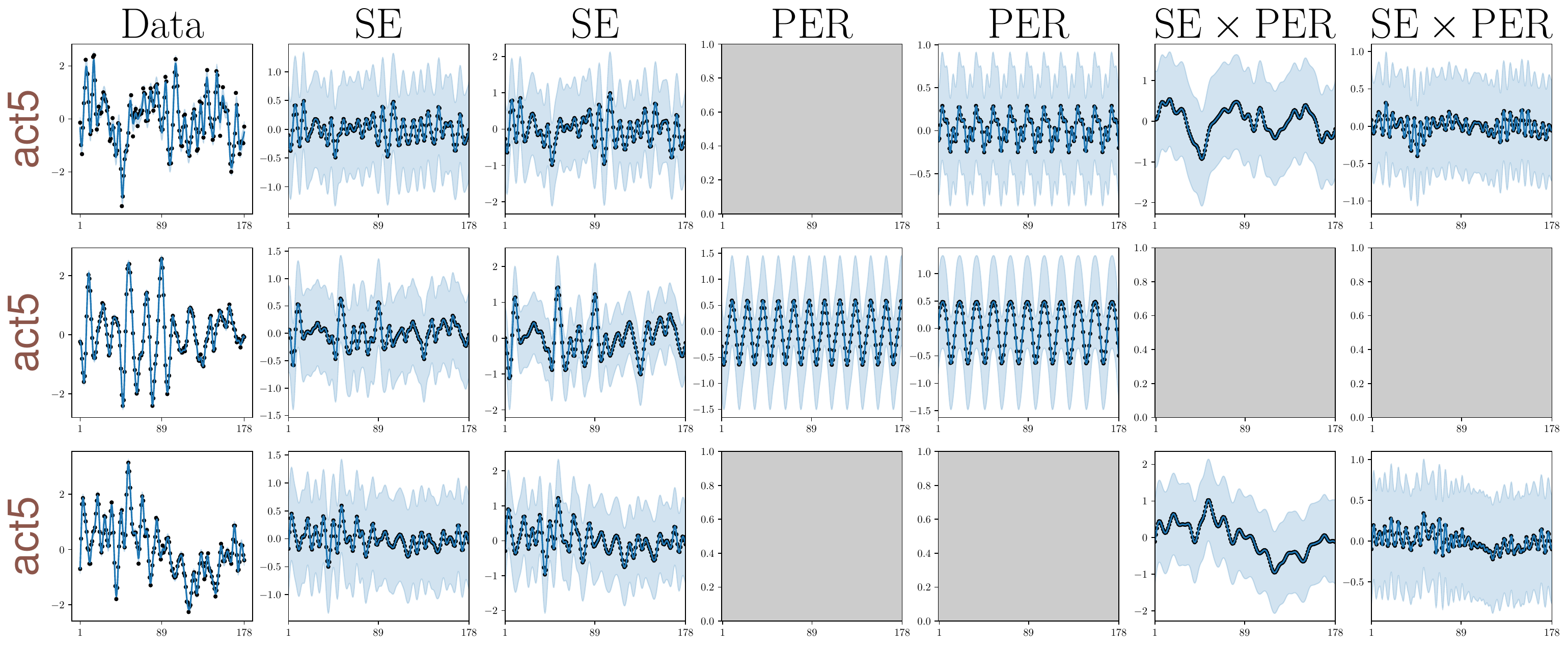}  };
 	
 	\node[] (a) at (3.3,-2.7) {(a)};
 	
 	\node[inner sep=0pt] (1) at (9.5,-1.2) {\includegraphics[width=0.07\textwidth]{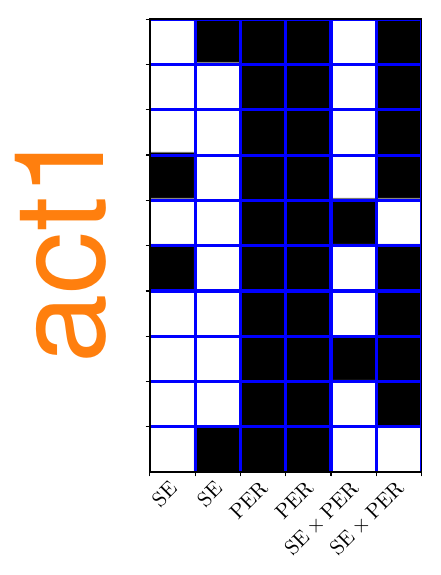}  };
 	\node[inner sep=0pt] (fit4) at (13.,-1) {\includegraphics[width=0.33\textwidth]{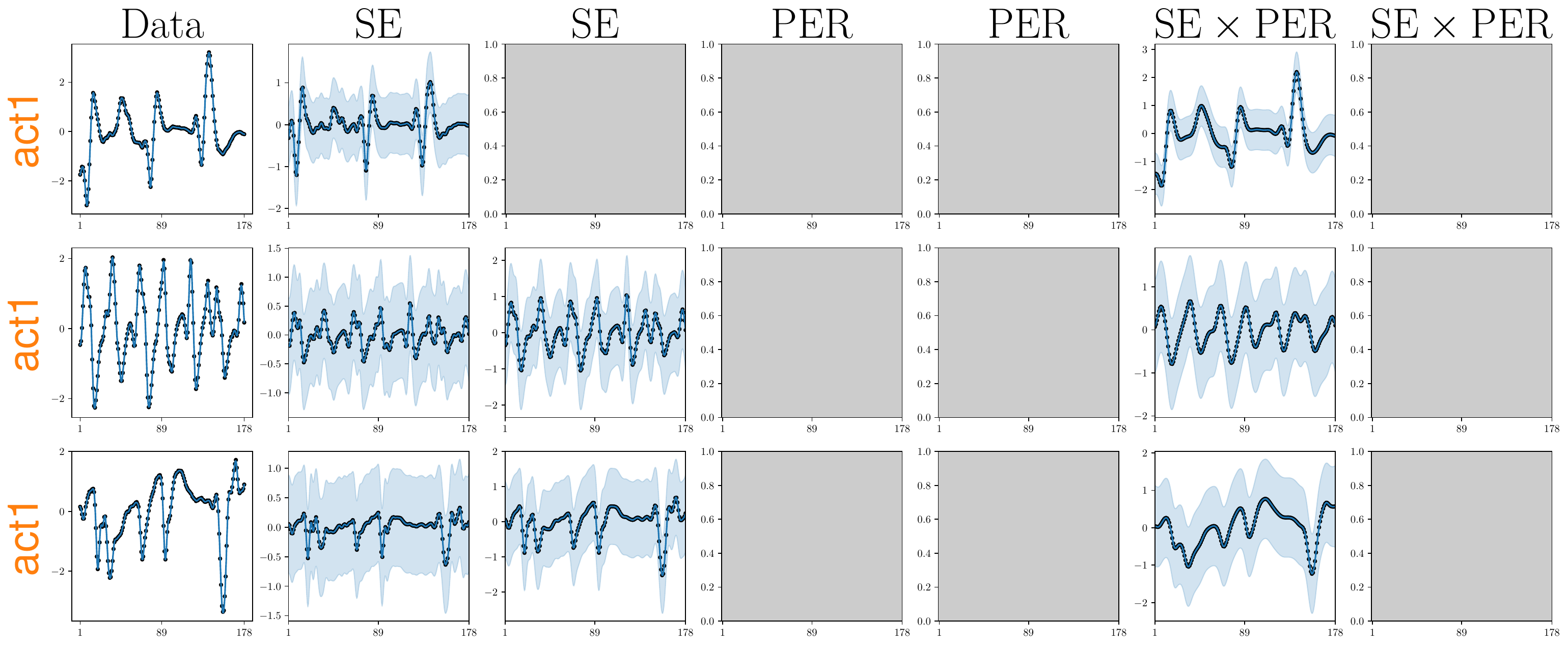}  };
 	
 	\node[] (b) at (11.3,-2.7) {(b)};
 	\end{tikzpicture}
 	\caption{\small Epileptic seizure data set. There are 5 activities of EEG recording: seizure (\textcolor{c1}{act1}), located tumor (\textcolor{c2}{act2}), identifying tumor (\textcolor{c3}{act3}), eyes closed (\textcolor{c4}{act4}), eyes open (\textcolor{c5}{act5}). (a) Non-seizure. \emph{Left}: part of learned $\vZ$ corresponding to each activity, black means $z_{nk}=0$, otherwise white; \emph{Right}: posterior of 3 last time series from \textcolor{c5}{act5} with their decomposition. (b) Seizure. \emph{Left}: part of learned $\vZ$ from \textcolor{c1}{act1}; \emph{Right}: posterior plot of 3 first time series from \textcolor{c1}{act1} with their decomposition. The missing subplots or gray background plots indicate $z_{nk}=0$. } 
 	\label{fig:seizure}
 \end{figure*}


\subsection{Real-world time series data} 
\textbf{Strongly correlated data sets}\hspace{5pt}  We tested our algorithm on three different data sets: US stock prices, US housing markets and currency exchanges. These data sets are well-described and publicly accessible~\cite{Hwangb16}. The US stock price data set consists of 9 stocks (GE, MSFT, XOM, PFE, C, WMT, INTC, BP, and AIG) containing 129 adjusted closes taken from the second half of 2001. The US housing market data set includes the 120-month  housing prices of 6 cities (New York, Los Angeles, Chicago, Phoenix, San Diego, San Francisco) from 2004 to 2013. The currency data set includes 4 currency exchange rates from US dollar to 4 emerging markets: South African Rand (ZAR), Indonesian Rupiah (IDR), Malaysian Ringgit (MYR), and Russian Rouble (RUB). Each currency exchange time series has 132 data points.  \\
\textbf{Heterogeneous data set}\hspace{5pt} We collected time series from various domains into a data set. It consists of gold prices, crude oil prices, NASDAQ composite index, and USD index\footnote{Quandl codes respectively are \texttt{WGC/GOLD\_DAILY\_USD}, \texttt{FRED/DCOILBRENTEU}, \texttt{NASDAQOMX/COMP},\texttt{FRED/DTWEXM}} from 2015 July 1$^{\text{st}}$ to 2018 July 1$^{\text{st}}$. We call this data set as GONU (\underline{G}old, \underline{O}il, \underline{N}ASDAQ, \underline{U}SD index). Each time series has 157 weekly prices or indexes taken from~\citet{quandl}. The interactions between this sets of time series are known to be complex. For instance, the gold and oil prices might have a negative correlation where one may increase but the other decreases. There are many studies in the financial research focusing on these target time series~\cite{financial_1,financial_2}.

\textbf{Epileptic seizure data set} We retrieved the epileptic seizure data set~\cite{seizure} from UCI repository~\cite{UCI_repo}. This data set contains EEG recordings of brain activities for 23.6s. Each record corresponds to one out of 5 activities including eyes open, eyes closed, identifying the tumor, located the tumor and seizure activity. Each time series contains 178 data points.

\subsection{Qualitative results}
With the motivation that interpretable machine learning models can help understand data better, thereby fostering scientific discovery and decision making, we carried experiments on the mentioned data sets to demonstrate the potential applicability of our search algorithm on LKM. 

 \subsubsection{Exploiting information from $\vZ$}
 \textbf{Learning $\vZ$}\hspace{5pt}  We visualize the variational parameters $\boldsymbol{\nu}$ in Figure~\ref{fig:learn_z}. The value of ${\nu_{nk}}$ is the probability of $z_{nk}=1$. The bigger $\nu_{nk}$ is, the more probable the kernel $\vC_k$ is selected for time series $\vx_n$. 
 
 \textbf{Interpreting $\vZ$} We randomly take 50 time series from the epileptic seizure data where each activity has 10 time series. Because finding a covariance kernel decomposition for a large number of time series is time-consuming, and therefore prohibits kernel structure search, we looked for latent kernels from the set of kernels $\{\kSE_1,\kSE_2, \kPer_1, \kPer_2, \kSE_3 \times \kPer_3, \kSE_4 \times \kPer_4\}$. Figure~\ref{fig:seizure} illustrates a summary of the model outputs. Readers may refer Appendix~\ref{sec:full_seizure} for the full output.
 
 We observe several interesting properties. Located tumor and identifying tumor are quite similar because the corresponding block matrix from $\vZ$ has the same sparsity. Also, having fewer active $\kSE$ kernels indicates that they do not vary much. The activities of opening eyes and closed eyes commonly have rapidly varying signals with small lengthscales. The seizure, on the other hand, has a similar level of sparsity comparing to those of opening eyes or closed eyes. However, there is no sign of low-frequency periodic pattern.
 
The latent matrix $\vZ$ encodes certain relations between time series in the light of kernel interpretability. Next, we fully employ the description of kernels to generate comparison reports.

\subsubsection{Comparison report}
 \begin{figure}[t!]
 	\centering
 	\scalebox{1.55}{
 		\input{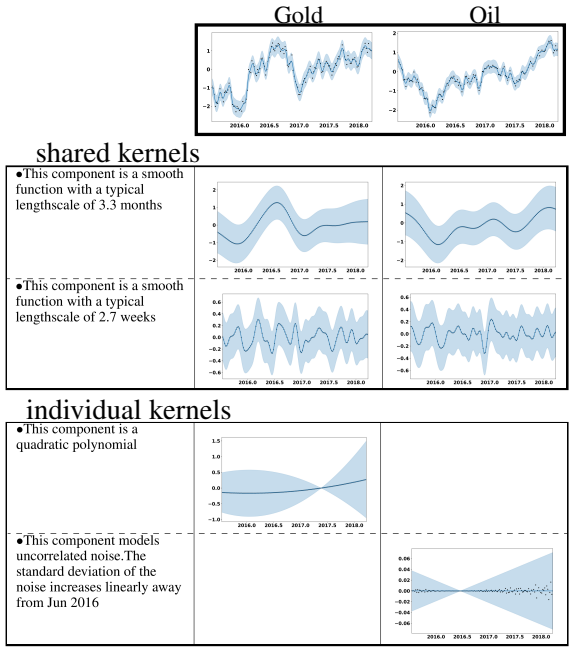}
 	}
 	\caption{\small A part of pairwise comparison between Gold and Oil in GONU data set. The uppermost plots are the posterior means and variances of two time series.
 		The remaining plots contain shared components and individual components with descriptions and posteriors $\vf_k|\vx_n$ for each time series. The blank in the individual components means ``not available".}
 	\label{fig:pair}
 \end{figure}

\textbf{Overview comparison} \hspace{5pt}By taking the advantage of the learned latent matrix $\vZ$ and the descriptive properties of found GP covariance structures, we generate a human-readable report containing the comparison among time series. For example, the generated text can have formats like 
\begin{center}
``[$T_1,\dots, T_m$] share [\emph{description}]''
\end{center}
where the replacement of [$T_1,\dots, T_m$] is a set of time series, [\textit{description}] is generated by the found GP structure. Below is extracted from GONU data set.
\noindent\fbox{%
	\parbox{0.46\textwidth}
	{%
		\textbullet \hspace{6pt} Gold, Oil, NASDAQ, USD index share the following property:\\
		This component is periodic with a period of 1.4 years but with varying amplitude. The amplitude of the function increases linearly away from Apr 2017. The shape of this function within each period has a typical lengthscale of 4.9 days.\\
		\textbullet \hspace{6pt}	Gold, Oil, USD index share the following property:\\
		This component is a smooth function with a typical lengthscale of 2.7 weeks.\\
		\textbullet \hspace{6pt} NASDAQ has the following property:\\
		This component is a linear function.
	}
}
\begin{table*}[t!]
\centering
	\scalebox{0.94}{
		\begin{tabular}{@{}c|cc|cc|cc|cc@{}}
			&  \multicolumn{2}{c|}{9 stocks} &  \multicolumn{2}{c|}{6 houses} & \multicolumn{2}{c|}{4 currencies} & \multicolumn{2}{c}{GONU} \\ 
		                   &  RMSE    & MNLP & RMSE     &  MNLP & RMSE        & MNLP  & RMSE &MNLP \\\midrule 
			Spike and Slab &  $10.07_{\pm0.12}$   & $2.87_{\pm0.05}$& $10.85_{\pm0.46}$    &  $6.92_{\pm0.09}$ & $174.71_{\pm14.52}$      & $4.09_{\pm0.10}$  & $1.07_{\pm0.08}$ & $2.36_{\pm0.11}$ \\
			GPRN           & $6.11_{\pm0.09} $    & $2.78_{\pm0.14}$ & $8.96_{\pm0.17}$     &  $6.64_{\pm0.46}$ & $193.13_{\pm49.40}$ & $4.24_{\pm0.20}$  & $1.16_{\pm0.12}$ & $2.46_{\pm0.28}$ \\
			LMC     & $8.20_{\pm0.53} $  & $2.24_{\pm0.23}$ &  $11.31_{\pm1.04}$ & $5.90_{\pm0.46}$     & $394.83_{\pm40.54}$ & $4.90_{\pm0.15}$  & $1.01_{\pm0.14}$ & $\mathbf{1.43_{\pm0.11}}$\\
			MOSM     & $5.48_{\pm1.01}$   & $2.97_{\pm0.01} $      &   $8.15_{\pm1.51}$ &  $5.90_{\pm0.20}$     & $318.26_{\pm101.52}$ &  $3.93_{\pm0.15}$  &  $0.84_{\pm0.18}$ & $3.13_{\pm1.06}$\\
			\acrshort*{abcd}           & $8.37_{\pm 0.03}$ & $2.58_{\pm 0.05}$  & $7.98_{\pm0.03}$  & $5.61_{\pm0.05}$  & $325.58_{\pm8.64}$ & $4.47_{\pm0.04}$   & $0.86_{\pm0.01}$& $2.21_{\pm0.03}$ \\
			\acrshort*{rabcd}          & $4.88_{\pm0.03}$  & $1.95_{\pm0.05}$  & $\mathbf{3.17_{\pm0.10}}$  & $6.07_{\pm0.09}$   & $208.32_{\pm5.02}$ &$\mathbf{3.62_{\pm0.03}}$  &  $0.97_{\pm0.03}$& $2.01_{\pm0.10}$
      \\
			\midrule
			\acrshort*{lkm}    & $\mathbf{4.58_{\pm0.16}}$  & $\mathbf{1.87_{\pm0.10}}$   & $4.37_{\pm0.16}$
			 &$\mathbf{5.54_{\pm0.40}}$& $\mathbf{133.00_{\pm16.92}}$ &$\mathbf{3.61_{\pm0.16}}$  & $\mathbf{0.76_{\pm0.07}}$ & $1.90_{\pm0.25}$     
			
		\end{tabular}
	}
	\captionof{table}{\small RMSEs and NMLPs for each data set with corresponding methods (5 independent runs per method). In most cases,~\acrshort*{lkm} has lower RMSEs and NMLPs compared to those of existing methods. }
	\label{tab:rmse}
\end{table*}

\begin{figure}
	\centering
	\scalebox{0.65}{
\begin{tikzpicture}

\begin{groupplot}[
	group style={
    	group size= 4 by 1, 
        horizontal sep=0.6cm,
        xticklabels at=edge bottom, 
        group name = bar_plots},
	height=5cm,
    width=4cm,
    ybar=1pt,
    area legend,
    xtick=data,
    tick label style={font=\scriptsize} ,
    symbolic x coords={Stocks, Houses, Currencies, GONU}, 
    ylabel style={align=center}, 
    legend columns = 3, 
    legend style={ legend, at={(-0.8,-0.15)},anchor=north},
    legend image post style={scale=0.4},
    legend to name=barlegend,
    area legend,
    legend entries={Spike and Slab,GPRN, LMC,MOSM, ABCD,R-ABCD,LKM (ours)}
]

\nextgroupplot[bar width=9pt]
\addplot[style={bblue,fill=bblue,mark=none}, error bars/.cd,y dir=both, y explicit, error bar style={color=black}] coordinates {(Stocks, 10.07) +- (0,0.12) };

\addplot[style={rred,fill=rred,mark=none}, error bars/.cd,y dir=both, y explicit, error bar style={color=black}]
coordinates {(Stocks,6.11) +- (0, 0.09)};
\addplot[style={GreenYellow,fill=GreenYellow,mark=none}, error bars/.cd,y dir=both, y explicit, error bar style={color=black}]
coordinates {(Stocks,8.20) +- (0, 0.53)};
\addplot[style={custompink,fill=custompink,mark=none}, error bars/.cd,y dir=both, y explicit, error bar style={color=black}]
coordinates {(Stocks,5.48) +- (0, 1.01)};
\addplot[style={ggreen,fill=ggreen,mark=none}, error bars/.cd,y dir=both, y explicit, error bar style={color=black}]
coordinates {(Stocks,8.37) +- (0,0.03) };
\addplot[style={ppurple,fill=ppurple,mark=none}, error bars/.cd,y dir=both, y explicit, error bar style={color=black}]
coordinates {(Stocks,4.88) +- (0, 0.03)};
\addplot[style={ppurple,fill=Tan,mark=none}, error bars/.cd,y dir=both, y explicit, error bar style={color=black}]
coordinates {(Stocks,4.58) +- (0, 0.16)};

\nextgroupplot[bar width=9pt]
\addplot[style={bblue,fill=bblue,mark=none}, error bars/.cd,y dir=both, y explicit, error bar style={color=black}]
coordinates {(Houses, 10.85) +- (0,0.46)};
\addplot[style={rred,fill=rred,mark=none}, error bars/.cd,y dir=both, y explicit, error bar style={color=black}]
coordinates { (Houses,8.96)+-(0, 0.17) };
\addplot[style={GreenYellow,fill=GreenYellow,mark=none}, error bars/.cd,y dir=both, y explicit, error bar style={color=black}]
coordinates { (Houses,11.31)+-(0, 1.04) };
\addplot[style={custompink,fill=custompink,mark=none}, error bars/.cd,y dir=both, y explicit, error bar style={color=black}]
coordinates {(Houses,8.15) +- (0, 1.51)};
\addplot[style={ggreen,fill=ggreen,mark=none}, error bars/.cd,y dir=both, y explicit, error bar style={color=black}]
coordinates {(Houses,7.98) +- (0, 0.03) };
\addplot[style={ppurple,fill=ppurple,mark=none}, error bars/.cd,y dir=both, y explicit, error bar style={color=black}]
coordinates { (Houses,3.17) +- (0, 0.10) };
\addplot[style={ppurple,fill=Tan,mark=none}, error bars/.cd,y dir=both, y explicit, error bar style={color=black}]
coordinates {(Houses,4.39) +- (0, 0.18)};

\nextgroupplot[bar width=9pt]
\addplot[style={bblue,fill=bblue,mark=none}, error bars/.cd,y dir=both, y explicit, error bar style={color=black}]
coordinates {(Currencies,174.71)+- (0,14.52)};
\addplot[style={rred,fill=rred,mark=none}, error bars/.cd,y dir=both, y explicit, error bar style={color=black}]
coordinates {(Currencies, 193.05)+- (0, 49.40)};
\addplot[style={GreenYellow,fill=GreenYellow,mark=none}, error bars/.cd,y dir=both, y explicit, error bar style={color=black}]
coordinates {(Currencies, 394.83)+- (0, 40.54)};
\addplot[style={custompink,fill=custompink,mark=none}, error bars/.cd,y dir=both, y explicit, error bar style={color=black}]
coordinates {(Currencies,318.26) +- (0, 101.52)};
\addplot[style={ggreen,fill=ggreen,mark=none}, error bars/.cd,y dir=both, y explicit, error bar style={color=black}]
coordinates {(Currencies,325.58)+-(0,8.64)};
\addplot[style={ppurple,fill=ppurple,mark=none}, error bars/.cd,y dir=both, y explicit, error bar style={color=black}]
coordinates {(Currencies, 208.62)+-(0,5.02)};
\addplot[style={ppurple,fill=Tan,mark=none}, error bars/.cd,y dir=both, y explicit, error bar style={color=black}]
coordinates { (Currencies, 133.00)+-(0, 16.92)};

\nextgroupplot[bar width=9pt]
\addplot[style={bblue,fill=bblue,mark=none}, error bars/.cd,y dir=both, y explicit, error bar style={color=black}]
coordinates {(GONU,1.07) +- (0, 0.08)};
\addplot[style={rred,fill=rred,mark=none}, error bars/.cd,y dir=both, y explicit, error bar style={color=black}]
coordinates {(GONU, 1.16)+- (0, 0.12)};
\addplot[style={GreenYellow,fill=GreenYellow,mark=none}, error bars/.cd,y dir=both, y explicit, error bar style={color=black}]
coordinates {(GONU, 1.01)+- (0, 0.14)};
\addplot[style={custompink,fill=custompink,mark=none}, error bars/.cd,y dir=both, y explicit, error bar style={color=black}]
coordinates {(GONU,0.84) +- (0, 0.18)};
\addplot[style={ggreen,fill=GreenYellow,mark=none}, error bars/.cd,y dir=both, y explicit, error bar style={color=black}]
coordinates {(GONU,0.86)+- (0, 0.01)};
\addplot[style={ppurple,fill=ppurple,mark=none}, error bars/.cd,y dir=both, y explicit, error bar style={color=black}]
coordinates {(GONU, 0.97)+- (0, 0.03) };
\addplot[style={ppurple,fill=Tan,mark=none}, error bars/.cd,y dir=both, y explicit, error bar style={color=black}]
coordinates { (GONU, 0.76) +- (0,0.07)};


\end{groupplot}
\node (fig3Legend) at ($(bar_plots c2r1.center)-(-1.5,2.9)$){\ref*{barlegend}};
\end{tikzpicture}
}
\captionof{figure}{RMSEs for each data set (9 stocks, 6 houses, 4 currencies, GONU) with corresponding methods.}
\label{fig:rmse_column}
\end{figure}
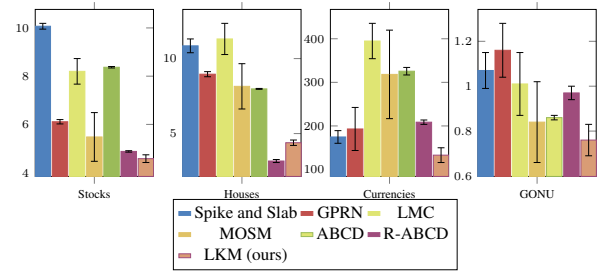

			
\textbf{Pairwise comparison} \hspace{5pt}
We provide another type of descriptive comparisons. Given a set of $N$ time series, the output of our model can generate ${N \choose 2}$ reports which compare each pair of time series. These reports give us a more detailed insight than the overview comparison. A report consists of shared components and individual ones between time series. Alongside with the description of the kernel structure of $\vC_k$, this type of report presents the corresponding posterior $\vf_k|\vx_n$ which will illustrate the variations of GP realizations on different time series (see Figure~\ref{fig:pair}).

  We bring a brief analysis of GONU data set as an example after taking a quick look over the generated report. For instance, the gold and oil prices share many common characteristics (long and short lengthscale varying), showing a marginally small difference. On the other hand, NASDAQ and USD indices differ each other with many distinctive individual kernels $\vC_k$s. Interestingly, the negative correlation behavior between the oil and USD indices (i.e. two time series often go in opposite directions) can be observed by shared kernels using~\acrshort*{lkm} (see Appendix~\ref{appendix:oil_usd}). These reports give an easy understanding for ones who do not have knowledge in finance. 

\subsection{Quantitative results}

\paragraph{Experiment setup} All experiments are conducted to predict future events (extrapolation) by splitting all data sets and trained with the first 90\%, then tested with the remaining 10\% as in the standard  setting for extrapolation tasks. Root mean square error (RMSE) and Mean Negative Log Likelihood (MNLP)~\cite{MNLP-Lazaro-Gredilla} are the main evaluation metrics in all data sets.

\textbf{Compare to multi-task~\glspl*{gp}}\hspace{5pt}  We compare multi-task GP models including `Spike and Slab' model \cite{spike_lab}  
, GP regression network (GPRN)~\cite{gprn_wilson12icml,nguyen13b_gprn}, Linear Model of Coregionalization (LMC)~\cite{kernel_vector,gpy2014} and Multi-Output Spectral Mixture (MOSM)~\cite{MOSM}. The result in Table~\ref{tab:rmse} and Figure~\ref{fig:rmse_column} indicates that our methods significantly outperform these models.
This result could be attributed to that~\acrshort*{lkm} leveraged by~\acrshort*{pe} selects compositional kernels which are flexible enough to fit complex data. \\ 

\textbf{Compare to existing kernel composition approaches} \hspace{5pt} We ran~\acrshort*{abcd} on individual time series then aggregated the results to compare with our models. Our model outperforms ABCD which is known as one of the state-of-the-art GP-based regression methods on univariate time series. It proves that our belief about the correlations among multiple time series is plausible. 
 
 We then compare with R-ABCD~\cite{Hwangb16}. Rather than making the assumption that all time series share a single global kernel, our model recognizes which structures are shared globally or partially. Quantitatively,~\acrshort*{lkm} shows promising results in prediction tasks. It outruns R-ABCD in most of the data sets (Table~\ref{tab:rmse} and Figure~\ref{fig:rmse_column}). In a relationally complex data set like GONU, LKM is significantly better while R-ABCD failed as the restriction due to its feature (function) sharing assumption.
 
Spike and Slab and GPRN models perform better than~\acrshort*{abcd} and R-ABCD in the currency data set where it contains highly volatile data. Although our model shares some computational procedures with~\acrshort*{abcd} and R-ABCD, our model is more robust to handle different types of time series data.


\section{Conclusion}
\label{sec:conclusion}
In this paper, we study a new perspective of multi-task GP learning where kernel structures are appropriately selected. We introduce the~\acrshort*{lkm} which learns kernel decompositions from a stochastic kernel process. We further present a pragmatic search algorithm leveraging our models to explore a larger structure space efficiently. Experimental results demonstrate promising performance in prediction tasks. Our proposed model also outputs a high-quality set of interpretable kernels which produces a comparison reports among multiple time series. 
\clearpage
\section*{Acknowledgment} 
This work is supported by Basic Science Research Program
through the National Research Foundation of Korea (NRF)
grant funded by the Korea government (MSIT: the Ministry
of Science and ICT) (NRF-2017R1A1A1A05001456) and
Institute for Information \& Communications Technology Planning \& Evaluation (IITP) grant funded by the MSIT (No.2017-0-01779, a machine learning and statistical inference framework
for explainable artificial intelligence).
\bibliography{ref}{}
\bibliographystyle{icml2019}

\clearpage
\onecolumn
\makeatother
\appendix

\section{Base kernels and search grammar in the ABCD framework}
\label{appendix:base}
Table~\ref{tab:base_kernel} contains base kernels described in~\cite{Lloyd2014ABCD}.

\begin{table*}
	\begin{center}
		\scalebox{1}{
			\begin{tabular}{l | l | c}
				Base Kernels & Encoding Function  & $k(x,x')$\\
				\hline
				White Noise (\textsc{WN}) & Uncorrelated noise & $\sigma^2 \delta_(x,x')$ \\
				Constant (\textsc{C}) & Constant functions & $\sigma^2$ \\
				Linear (\textsc{LIN}) & Linear functions & $\sigma^2 (x - l)(x - l')$\\
				Squared Exponential (\textsc{SE}) & Smooth functions & $\sigma^2 \exp (-\frac{(x - x')^2}{2l^2})$\\
				Periodic (\textsc{PER}) & Periodic functions & $\sigma^2 \frac{\exp(\frac{\cos \frac{2\pi(x-x')}{p}}{l^2}) - I_0(\frac{1}{l^2})}{\exp(\frac{1}{l^2}) - I_0(\frac{1}{l^2})}$ \\
			\end{tabular}
		}
	\end{center}
	\caption{List of base kernels}
	\label{tab:base_kernel}
\end{table*}

The language of models (or kernels) is presented by a set of rules in the grammar:
\begin{align*}
\mathcal{S} &\rightarrow \mathcal{S + B} &&\mathcal{S} \rightarrow \mathcal{S \times B} \\
\mathcal{S} &\rightarrow \mathcal{B}  
\end{align*}
where $\mathcal{S}$ represents any kernel subexpression, $\mathcal{B}$ and $\mathcal{B}'$ are base kernels~\cite{Lloyd2014ABCD}.

\section{Compute $\expect[\log p(\vZ)]$}
\label{appendix:expect}
In~\citet{vibp_Doshi}, the variational inference approximating $\vZ$ considered two approaches: finite variational approach and infinite variational approach. We will take a brief review of estimating $\expect[\log p(\vZ)]$. Readers may refer to~\cite{vibp_Doshi} to have more details.
In the finite variational approach, sampling $\vZ$ involves
\begin{align*}
	\pi_k \sim & \textrm{Beta}(\alpha/K,1),\\
	z_{nk} \sim & \textrm{Bernoulli}(\pi_k).
\end{align*}
Here the generative procedure involves an additional random variable $\boldsymbol{\pi}$ which is omitted in the main text for simplicity. The variational inference requires to approximate the posterior distribution over $\boldsymbol{\pi}$ by $\prod_k q(\pi_k)$. Specifically, each $q(\pi_k)$ follows a Beta distribution Beta$(\tau_{k_1}, \tau_{k_2})$. Since $\vX$ and $\boldsymbol{\pi}$ are conditionally independent given $\vZ$, $\expect[\log p(\vX|\vZ)]$ discussed in the main text is independent to $\boldsymbol{\pi}$. We can compute $\expect[\log p(\vZ)]$ as
\begin{align*}
	&\expect[\log p(\vZ)]\\
	 =& \sum_{k=1}^K\left[\log \frac{\alpha}{K} + \left(\frac{\alpha}{K} - 1\right)(\psi(\tau_{k_1})- \psi(\tau_{k_1} + \tau_{k_2}))\right] + \sum_{k=1}^K \sum_{n=1}^N [\nu_{nk}\psi(\tau_{k_1}) + (1 - \nu_{nk}) \psi(\tau_{k_2}) - \psi(\tau_{k_1} + \tau_{k_2})],
\end{align*}
where $\psi(\cdot)$ is the digamma function.

While in the finite variational approach, stick breaking construction~\cite{stick_breaking_ibp} is used to sample $\vZ$ as
\begin{align*}
v_k \sim & \textrm{Beta}(\alpha,1),\\
\pi_k = &\prod_{i=0}^K v_i, \\
z_{nk} \sim & \textrm{Bernoulli}(\pi_k),
\end{align*}
with $k=1\dots\infty$. Similarly, the variational distribution $q(\boldsymbol{v})$ is proposed to approximate $p(\boldsymbol{v})$ by independent Beta$(\tau_{k_1},\tau_{k_2})$s
\begin{align*}
\expect[\log p(\vZ)] =& \sum_{k=1}^K\left[\log {\alpha} + \left({\alpha} -1\right)(\psi(\tau_{k_1})- \psi(\tau_{k_1} + \tau_{k_2}))\right]  \\
&+\sum_{k=1}^K \sum_{n=1}^N \Biggl[\nu_{nk}\left(\sum_{m=1}^k\psi(\tau_{m_1}) - \psi(\tau_{m_1} + \tau_{m_2}) \right) + (1 - \nu_{nk})\expect_{\boldsymbol{v}}\Bigl[\log(1 - \prod_{m=1}^k v_m) \Bigr]  \Biggr],
\end{align*}
with $\expect_{\boldsymbol{v}}\left[\log(1 - \prod_{m=1}^k v_m) \right] $ is further approximated by Taylor expansion.

\section{Comparison of search space in~\acrshort*{pe} with~\acrshort*{lkm} and~\acrshort*{ckl}}
\label{sec:compare_pe_ckl}
We emphasize that~\acrshort*{pe} with~\acrshort*{lkm} considers a larger number of kernel structures than those in~\acrshort*{ckl}. 
Suppose that~\acrshort*{ckl} and our search algorithm have the same found structure at a depth $d$. While the~\acrshort*{ckl}'s structure is $\mathcal{S}_d = \mathcal{S}_d^{(1)}  + \dots + \mathcal{S}_d^{(K)}$,~\acrshort*{pe} represents as a set $\{\mathcal{S}_d^{(1)}, \dots,\mathcal{S}_d^{(K)}\}$. Let us examine the cardinality of kernel spaces after performing an expansion to the next depth.
The procedure is to extract substructures from the current structure, then apply grammar rules on the structure. In~\acrshort*{ckl}, substructures consist of all structures generated from the combinations of $\mathcal{B}_d^{(k_l)}$ in each individual $\mathcal{S}_d^{(k)}$ and ones generated by the combination of all $\mathcal{S}_d^{(k)}$. The former has $O(K \sum_l {L \choose l} ) = O(K2^L)$ substructures where $L$ is the largest number of base kernels in $\mathcal{S}_d^{(k)}$. The latter creates $O(\sum_{k}{K\choose k}) = O(2^K)$ combinations. When the maximum number of grammar rules per substructure is $R$, the total number of candidates at the depth $d+1$ is $O(RK2^L + R2^K)$.

Our approach only applies expansion on individual structure $\mathcal{S}_d^{(k)}$ via the combinations of $\mathcal{B}_d^{(k_l)}$. However, the search space still includes all the cases when substructures are extracted from a combination of $\mathcal{S}_{d}^{(k)}$. For instance, the generation from LIN+PER+SE to (LIN+PER)$\times$SE+SE in \acrshort*{ckl} is equivalent to the generation from $\{$LIN, PER, SE$\}$ to $\{$LIN$\times$SE, PER$\times$SE, SE$\}$ in our approach. 
For the case of PE, the additive kernel set will be expanded into a new one having the number of elements $R2^L + K$.  With the flexible binary indications (on/off) of $\vZ$, the number of all possible kernels is $O(K2^{R2^L + K})$ when all structures are visited to be expanded.

\begin{figure*}[t]
	\centering
	\includegraphics[width=0.8\textwidth]{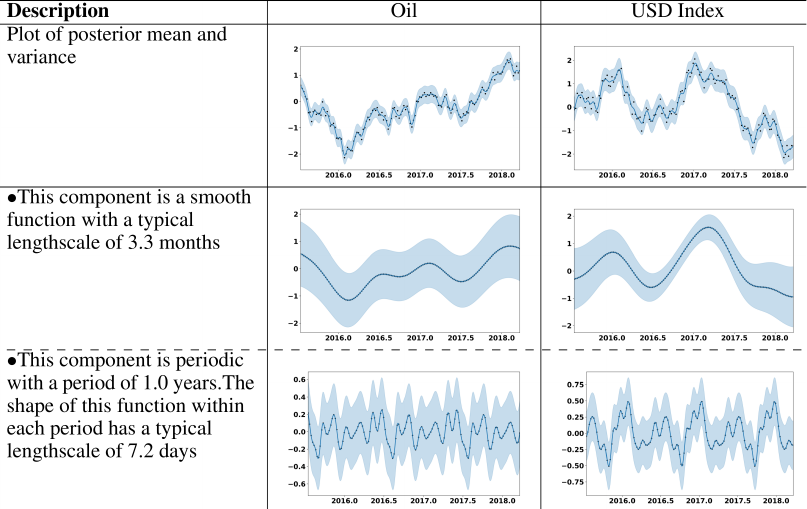}
	\caption{Comparing Oil and USD index. This is extracted from the pairwise comparison of GONU data set.}
	\label{fig:ou}
\end{figure*}

\section{Pairwise comparison between Oil and USD index}
As we discussed in the main text, our model can recognize the inverse correlation by looking at the first component in Figure~\ref{fig:ou}. The second component in Figure~\ref{fig:ou} is another example of the posterior $\vf_k|\vx_n$ realized differently given different time series. This observation is found in a real-world data set.
\label{appendix:oil_usd}
\section{Full output of seizure data set}
\label{sec:full_seizure}
Figure~\ref{fig:full_seizure} describes the output of our model.
\begin{figure*}[h!]
	\centering
	\includegraphics[width=0.65\textwidth]{./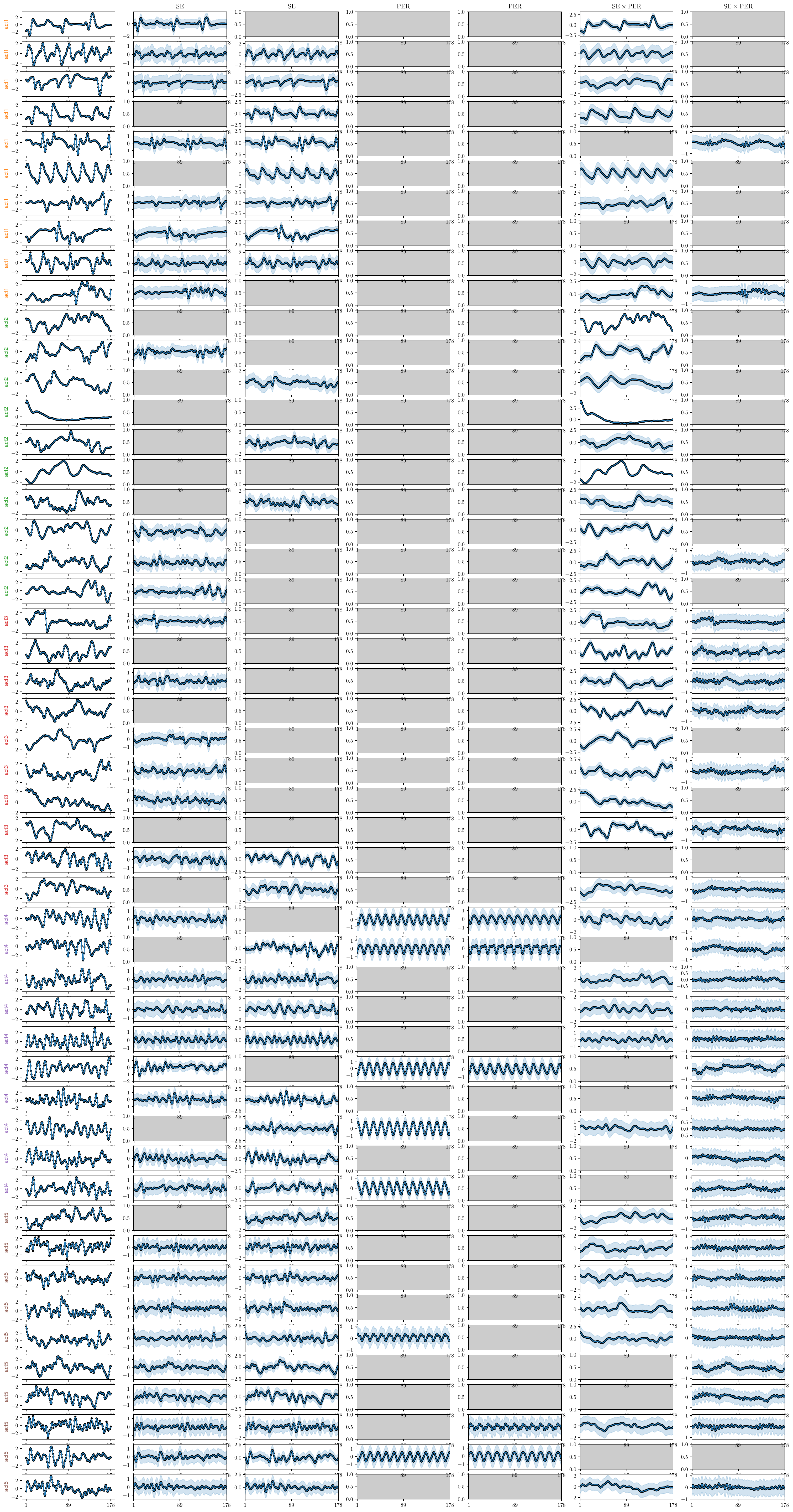}
	\caption{Full output of seizure data set. First column is the posterior plot of each time series. The remaining columns are decomposed components. Missing plot indicate there is no component w.r.t the corresponding time series.}
	\label{fig:full_seizure}
\end{figure*}

%
%

\end{document}